\tikzset{
>=stealth',
  punktchain/.style={
    rectangle, 
    rounded corners, 
    % fill=black!10,
    draw=black, very thick,
    text width=10em, 
    minimum height=3em, 
    text centered, 
    on chain},
  line/.style={draw, thick, <-},
  element/.style={
    tape,
    top color=white,
    bottom color=blue!50!black!60!,
    minimum width=8em,
    draw=blue!40!black!90, very thick,
    text width=10em, 
    minimum height=3.5em, 
    text centered, 
    on chain},
  every join/.style={->, thick,shorten >=1pt},
  decoration={brace},
  tuborg/.style={decorate},
  tubnode/.style={midway, right=2pt},
}
\newtheorem{theorem}{Theorem}
\newtheorem{definition}{Definition}
\newtheorem{remark}{Remark}
\newcommand{\bal}[1] {\ensuremath{\left(\begin{array}{#1}}}
\newcommand{\ear} {\ensuremath{\end{array}\right)}}
\newcommand{\bals}[1] {\ensuremath{\left[\begin{array}{#1}}} % Begin Array Left Square
\newcommand{\ears} {\ensuremath{\end{array} \right] }} % End Array Right Square
\let\leq\leqslant
\let\geq\geqslant
\newcommand{\calI}{\ensuremath{\mathcal{I}}}
\newcommand{\bmat}{\begin{matrix}}
\newcommand{\emat}{\end{matrix}}
\newcommand{\bbm}{\begin{bmatrix}}
\newcommand{\ebm}{\end{bmatrix}}
\newcommand{\bpm}{\begin{pmatrix}}
\newcommand{\epm}{\end{pmatrix}}
\newcommand{\bse}{\begin{subequations}}
\newcommand{\ese}{\end{subequations}}
\newcommand{\beq}{\begin{equation}}
\newcommand{\eeq}{\end{equation}}
\newcommand{\ben}{\begin{enumerate}}
\newcommand{\een}{\end{enumerate}}
\newcommand{\beni}{\renewcommand{\labelenumi}{\roman{enumi}.}
\renewcommand{\theenumi}{\roman{enumi}}\begin{enumerate}}
\newcommand{\eeni}{\end{enumerate}\renewcommand{\labelenumi}{\arabic{enumi}.}
\renewcommand{\theenumi}{\arabic{enumi}}}
\newcommand{\bena}{\renewcommand{\labelenumi}{\alpha{enumi}.}
\renewcommand{\theenumi}{\alpha{enumi}}\begin{enumerate}}
\newcommand{\eena}{\end{enumerate}\renewcommand{\labelenumi}{\arabic{enumi}.}
\renewcommand{\theenumi}{\arabic{enumi}}}
\newcommand{\bit}{\begin{itemize}}
\newcommand{\eit}{\end{itemize}}
\title{\LARGE \bf Cooperative decentralised circumnavigation with application to algal bloom tracking}
\author{Joana Fonseca, Jieqiang Wei, Karl H. Johansson and Tor Arne Johansen% <-this % stops a space
\thanks{
This work is supported by Knut and Alice Wallenberg Foundation, Swedish Research Council, Swedish Foundation for Strategic Research, Research Council of Norway, CoE AMOS grant number 223254, and MASSIVE project grant number 270959.}
\thanks{Joana Fonseca, Jieqiang Wei, Karl H. Johansson are with the ACCESS Linnaeus Centre, School of Electrical Engineering and Computer Science. 
 KTH Royal Institute of Technology,
 SE-100 44 Stockholm, Sweden.
 {\tt\small \{jfgf, jieqiang, kallej\}@kth.se}.
}
\thanks{Tor Arne Johansen is with Department of Engineering Cybernetics,
Centre for Autonomous Marine Operations
and Systems (NTNU AMOS),
Norwegian University of Science
and Technology,
Trondheim N-7491, Norway.
{\tt\small tor.arne.johansen@ntnu.no}.
}
}
\begin{document}

\maketitle
\thispagestyle{empty}
\pagestyle{empty}

%%%%%%%%%%%%%%%%%%%%%%%%%%%%%%%%%%%%%%%%%%%%%%%%%%%%%%%%%%%%%%%%%%%%%
\begin{abstract}

Harmful algal blooms occur frequently and deteriorate water quality. A reliable method is proposed in this paper to track algal blooms using a set of autonomous surface robots. 
A satellite image indicates the existence and initial location of the algal bloom for the deployment of the robot system. The algal bloom area is approximated by a circle with time varying location and size. This circle is estimated and circumnavigated by the robots which are able to locally sense its boundary.
A multi-agent control algorithm is proposed for the continuous monitoring of the dynamic evolution of the algal bloom.
Such algorithm comprises of a decentralised least squares estimation of the target and a controller for circumnavigation.
We prove the convergence of the robots to the circle and in equally spaced positions around it. 
Simulation results with data provided by the SINMOD ocean model are used to illustrate the theoretical results.

\end{abstract}

%%%%%%%%%%%%%%%%%%%%%%%%%%%%%%%%%%%%%%%%%%%%%%%%%%%%%%%%%%%%%%%%%%%%%
\section{INTRODUCTION}

All over the world, the phenomena of harmful algal blooms occurs frequently. Plenty of research has been done regarding the nature of this phenomena, its causes and its impact. Note that, for instance, according to \cite{Harmful}, this phenomena is worth our best efforts to track as "Harmful algal blooms (HABs) cause human illness, large-scale mortality of fish, shellfish, mammals, and birds, and deteriorating water quality". Throughout this paper we'll be using simulated data of these algal blooms in the Norwegian sea. 
%This data was provided by NTNU AMOS.

As a motivating example for this research, we can see how the use of autonomous surface vehicles (ASVs) allows to perform a series of measurement runs over a long period of time at sea \cite{asv}.  Hence, we believe a good solution relies on a system of ASVs with measuring abilities paired up with a satellite.

One may wonder why the satellite data is not enough for the problem of tracking these algal blooms. This could be a solution if we are interested in perhaps obtaining images and further studying them. But this is not the case. Our goal is to persistently track the different fronts of the algal bloom with surface agents close enough to the field to provide valuable data. With a satellite, there is a very low frequency of measurements. For instance, the data with which we simulate on this paper consists of two low quality images per day of the algal bloom. This satellite is not geosynchronous so it can only measure a specific area of the earth periodically. Also, the quality is commonly low due to clouds or other atmosphere obstacles. 
Therefore, a good solution relies on surface agents as well. We represent that idea in Fig.\ref{earth}, where the paper planes are the robots.

\begin{figure}
    \centering
    \includegraphics[width=7cm,trim={0 0.2cm 0 1cm},clip]{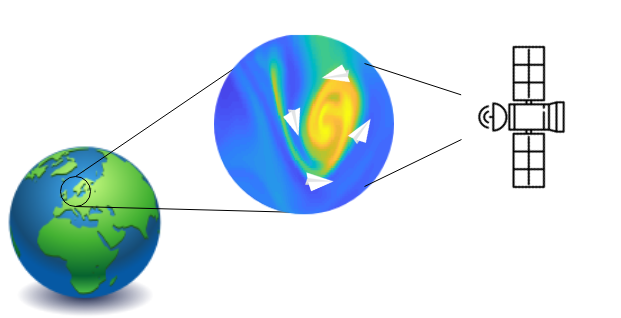}
    \caption{Tracking an algal bloom using a multi-robot system with local sensors and GPS}
    \label{earth}
\end{figure}

However, we seem to still be on the brink of discovering the best methods to consistently and efficiently track and circumnavigate these algal blooms.
In \cite{egerstedt2001formation} a path following algorithm is proposed for formation control of a multi-agent system. The authors prove that if the tracking errors are bounded, their method stabilises the formation error. However, it is assumed that there is perfect information on the path to follow. For our problem, we would like to estimate the target, design the path and control the multi-agent system.
In \cite{dimarogonas2008stability} and \cite{cao2007controlling} a control law for distance-based formation control which guarantees stability is proposed. Also in section 6.3.1 of \cite{Sun2018}, where target tracking is considered, they use distance-based formation control. However, a distance-based protocol does not suit our target tracking problem.
In \cite{Franchi} a protocol for target tracking in 3D is designed with guaranteed collision avoidance. However, it is assumed that the target is a fixed object that may move and rotate but never change its shape, as in our case. 
In \cite{swarmAni} and \cite{swarmLi}, controllers are synthesised for a swarm of robots to generate a desired two-dimensional geometric pattern specified by a simple closed planar curve. It is assumed that the shape is given to the swarm and not estimated in real-time. This is not true for our case.
In \cite{Shames2012} an adaptive protocol to circumnavigate around a moving point is proposed, e.g., the fish tracking problem. They used adaptive estimation for point tracking with known constant distance and they use just one agent.
In \cite{distance} the problem and assumptions are similar as the previous paper but here they apply sliding mode control. Even though we also assume the agents can measure the distance to the target, these papers assume that the target to track is a  point.
In \cite{iros2} and\cite{iros} the agent has access to the bearing measure towards the target. This assumption differs from ours as we assume we measure only the distance.
Some closely related results \cite{Johanna}, \cite{bearing} and \cite{bearing2} use either bearing or distance measurements to the target while using a network of autonomous agents to circumnavigate. While relevant, these results do not apply to a shape but only to a moving point with circumnavigation within a preset distance.
In \cite{iros3} they devise an algorithm such that one robot can circumnavigate a circular target from a prescribed radius using the bearing measurement. Even though they circumnavigate a circle, they do so at a prescribed distance and it is assumed that the robot is capable of measuring the bearing to the target, which is not the case in our paper.

The main contribution of this paper is a distributed algorithm that includes the real time estimation of the target and devises a control protocol to apply to each agent. We focus both on mathematical guarantees of bounded convergence and on physical restrictions for implementation. The present algorithm was tested using data from SINMOD of an algal bloom target in the Norwegian sea. 

%%%%%%%%%%%%%%%%%%%%%%%%%%%%%%%%%%%%%%%%%%%%%%%%%%%%%%%%%%%%%%%%%%%%%
\subsection{Notations}\label{s: preli}

The notations used in this paper are fairly standard. $\mathds{1}$ is an array of ones. $\|\cdot\|_p$ denotes the $\ell_p$-norm and  the $\ell_2$-norm is denoted simply as $\|\cdot\|$ without a subscript.
We define a rotation matrix $E$ as  
\begin{equation}
E =
\begin{bmatrix}
0 & 1 \\
-1 & 0
\end{bmatrix} .
\end{equation}

%%%%%%%%%%%%%%%%%%%%%%%%%%%%%%%%%%%%%%%%%%%%%%%%%%%%%%%%%%%%%%%%%%%%%
\subsection{Problem Definition}\label{s: definition}

Having a circular moving and varying algal bloom shape we wish to circumnavigate it using a system of robots. Each robot is equipped with a sensor that indicates the distance to the boundary, including whether it is inside the shape or outside. First step is the estimation of the parameters of the algal bloom circle, that is, its centre and radius for every time instance. Second step is to design a control law for all robots to circumnavigate the shape according to the estimated circle. 
Furthermore, it should be proved that the estimated parameters converge to the real ones and the robots converge to the boundary, while circumnavigating it. Plus, they should be equally distributed along such boundary. 

%%%%%%%%%%%%%%%%%%%%%%%%%%%%%%%%%%%%%%%%%%%%%%%%%%%%%%%%%%%%%%%%%%%%%
\subsection{Outline}\label{s: outlay}

The remaining sections of this paper are organised as follows. In Section \ref{s: problem}, the main problem of interest is formulated. 
The main results are presented in Section \ref{s: main}, where the protocol is designed and its proofs of convergence presented.
Some simulations presenting the performance of the proposed algorithm are given in Section \ref{s:simulation}. 
Concluding remarks and future directions come in Section \ref{s: conclusion}.

%%%%%%%%%%%%%%%%%%%%%%%%%%%%%%%%%%%%%%%%%%%%%%%%%%%%%%%%%%%%%%%%%%%%%%
\section{PROBLEM STATEMENT}\label{s: problem}

In this paper we consider the problem of tracking a circular shape using a multi-robot system and a satellite. This shape may be very irregular and unstable over time. We assume the shape is close to a circle. An initial image of the algal bloom confirms such assumption, as seen in Fig.\ref{encircle}, and then we can decide to use our algorithm and deploy the agents. 

\begin{figure}
  \begin{subfigure}[t]{.24\textwidth}
    \centering
    \includegraphics[width=\linewidth,trim={0.4cm 0 0.8cm 0.6cm},clip]{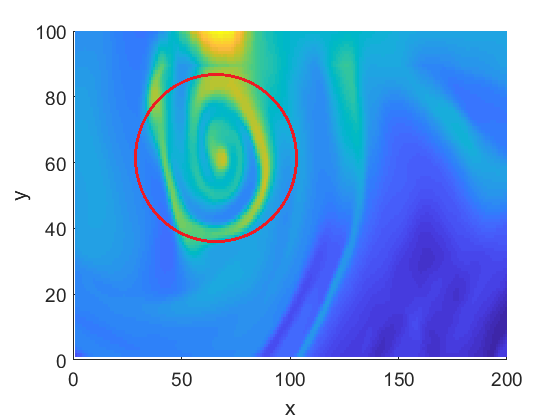}
  \end{subfigure}
  \begin{subfigure}[t]{.24\textwidth}
    \centering
    \includegraphics[width=\linewidth,trim={0.4cm 0 0.8cm 0.6cm},clip]{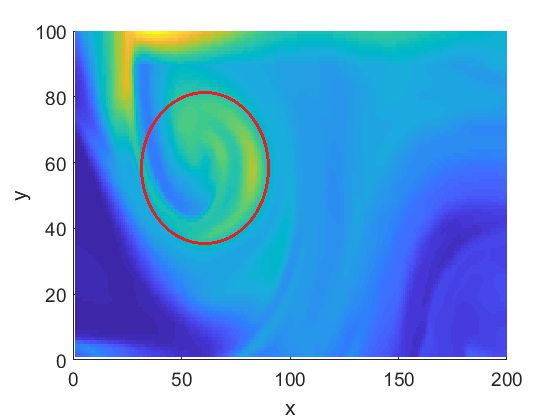}
  \end{subfigure}
  \begin{subfigure}[t]{.24\textwidth}
    \centering
    \includegraphics[width=\linewidth,trim={0.4cm 0 0.8cm 0.6cm},clip]{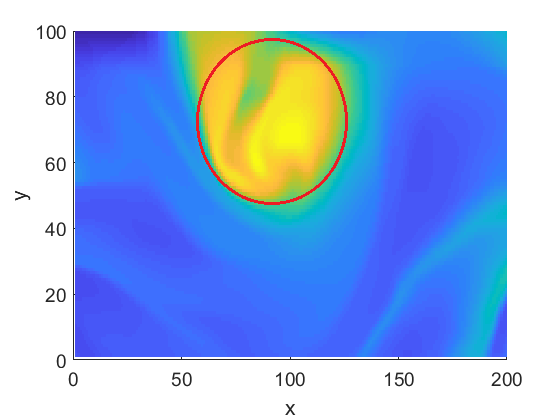}
  \end{subfigure}
  \begin{subfigure}[t]{.24\textwidth}
    \centering
    \includegraphics[width=\linewidth,trim={0.4cm 0 0.8cm 0.6cm},clip]{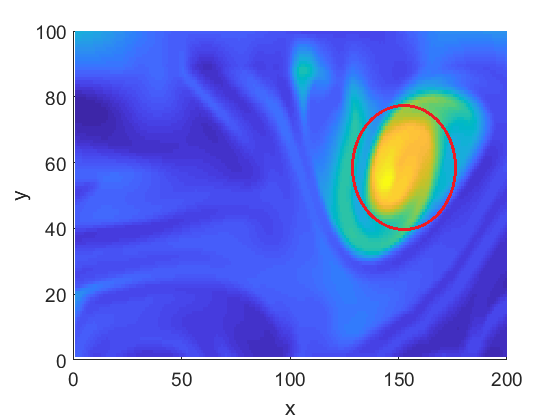}
  \end{subfigure}
  \caption{Time-lapse of the algal bloom progression. There is approximately half a day between each image. Warm colours (yellow, orange, green) indicate high density of algal and cold colours (blues) indicate low density of algal.}
  \label{encircle}
\end{figure}

We define this circle as
\begin{equation}\label{t_est}
\begin{aligned} 
(\mathbf{c}(t),r(t)) \in \mathbb{R}^3,
\end{aligned}
\end{equation}
where $\mathbf{c}(t)= (x(t), y(t))$ and $r(t)$ are the centre and the radius of the circle, respectively.
After confirming the algal bloom is close enough to a circle we can then estimate it by our robot's measurements. This estimate is represented as $(\mathbf{\hat c}(t),\hat r(t)) \in \mathbb{R}^3$. Note that the usage of this circle does not compromise the generality of the algorithm. Instead, it guarantees a smooth circumnavigation for any irregular shape close to a circle. Similar algorithm can be done for shapes that can be approximated by ellipsoids, but we present a simpler case, namely with circle shapes, due to lack of space. 
%From the observed data, algal blooms appear to be commonly close to circular shapes.

In order to solve this tracking problem we use two types of tools: a satellite and a system of robots. The satellite obtains data from the target in the form an image depending on the weather. Then, it calculates by image processing the possible initial centre and radius of such circle and shares it with the robots so they can move towards the target and initiate circumnavigation. 
So, the satellite would provide initial estimates $\mathbf{\hat{c}}(0)= (\hat{x}(0), \hat{y}(0))$ and $\hat{r}(0)$.
The robots constantly measure their distances to the target's boundary, as well as whether they're inside or outside the target, and share it with the other robots. Each robot has access to its GPS position and to the position of the robot in front of it. This communication scheme is represented in Fig.\ref{comuni}. Note that $\mathbf{\hat{c}}_0$, $\hat{r}_0$ represent the initial values for the estimate of the target. Values such as $B_i$, $p_i$, $D^b_i$, $S_i$ will be soon properly defined.

\begin{figure}
    \centering
    \includegraphics[width=8.5cm,trim={0 0.2cm 0 0.1cm},clip]{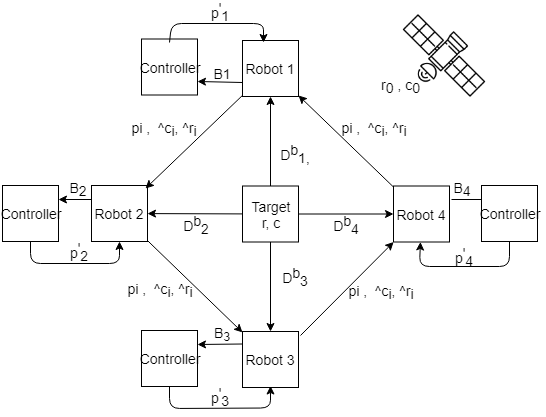}
    \caption{System setup and communication architecture}
    \label{comuni}
\end{figure}

The system of robots will jointly circumnavigate the target and provide real time information of different fronts. We define we have $n$ agents and, using the satellite information, they are initialised at positions $\mathbf{p_i}(0), i\in\calI$, which are outside of the shape and form a counterclockwise directed ring on the surface.
The kinematic of the agents is of the form
\begin{align} \label{e:dyn_agent}
\mathbf{\dot p_i} = \mathbf{u_i}, \qquad   i\in\calI,
\end{align}
where $\mathbf{p_i}$ is a vector that contains the position $p_i = [ x_i,  y_i ]^{\top}\in\mathbb{R}^2$ and $\mathbf{u_i}\in\mathbb{R}^2$ is the control input.

In order to avoid the agents concentrating in some region, in which case they may loose information on other fronts, we would like to space the agents equally along the defined circle.
Therefore, we define that the counterclockwise angle between the vector $\mathbf{p_i}-\mathbf{\hat{c}}$ and $\mathbf{p_{i*1}} -\mathbf{\hat{c}}$ is denoted as $\beta_i$ for $i=1,\dots,n-1$, and the angle between $\mathbf{p_n}-\mathbf{\hat{c}}$ and $\mathbf{p_1}-\mathbf{\hat{c}}$ is denoted as $\beta_n$,
\medskip
 \begin{equation}\label{beta}
 \begin{aligned} 
  \beta_{i} = & \angle(\mathbf{p_{i+1}}-\mathbf{\hat{c}},\mathbf{p_i}-\mathbf{\hat{c}}), \qquad i=1,\dots,n-1 \\
  \beta_n = & \angle(\mathbf{p_1}-\mathbf{\hat{c}},\mathbf{p_n}-\mathbf{\hat{c}}).
 \end{aligned}
 \end{equation}

Notice that in this case, 
\begin{align}
    \beta_i(0)\geq 0, \quad \textnormal{and} \quad \sum_{i=1}^{n} \beta_i(0) = 2\pi.
 \end{align} 
This is represented in figure Fig.\ref{Scheme1}.

\begin{figure}
\centering
\begin{tikzpicture}
  [
    scale=1.7,
    >=stealth,
    point/.style = {draw, circle,  fill = black, inner sep = 1pt},
    dot/.style   = {draw, circle,  fill = black, inner sep = .2pt},
  ]
  
  % boundary circle 1
  \def\rad{0.3}
  \node (n1) at (1.5,0) [point, label = { }]{};
  \draw (n1) circle (\rad);

  % boundary circle 2
  \def\rad{0.2}
  \node (n2) at (0.6,0.8) [point, label = { }]{};
  \draw (n2) circle (\rad);
  
  % boundary circle 3
  \def\rad{0.3}
  \node (n3) at (-0.9,1.2) [point, label = { }]{};
  \draw (n3) circle (\rad);
  
  % boundary circle 4
  \def\rad{0.3}
  \node (n4) at (-0.9,0) [point, label = { }]{};
  \draw (n4) circle (\rad);
  
  % estimated circle
  \def\rad{1.2}
  \node (est) at (0,0) [point, label = {below right:$\hat{c}$}]{};
  \draw (est) circle (\rad);

  % just points on the circle
  \node (c) at (0,0) [point, label] {};
  \node (n1) at (1.5,0) [ ] {};
  \node (n2) at (0.6,0.8) [ ] {};
  \node (n3) at (-0.9,1.2) [] {}; 
  \node (n4) at (-0.9,0) [ ] {};

  % angle
    \draw[dotted]
    -- (0,0) coordinate (centre) node[left] {}
    -- (1.5,0) coordinate (p1) node[below right] {$p_1$};
    \draw[dotted]
    -- (0,0) coordinate (centre) node[left] {}
    -- (0.7,1) coordinate (p2) node[above right] {$p_2$};
    \draw[dotted]
    -- (0,0) coordinate (centre) node[left] {}
    -- (-0.9,1.2) coordinate (p3) node[above right] {$p_3$};
    \draw[dotted]
    -- (0,0) coordinate (centre) node[left] {}
    -- (-0.9,0) coordinate (p4) node[below right] {$p_4$};

    \draw
    pic["$\beta_1$", draw=orange, ->, angle eccentricity=1.4, angle radius=0.6cm] {angle=n1--c--n2};
    \draw
    pic["$\beta_2$", draw=orange, ->, angle eccentricity=1.4, angle radius=0.6cm] {angle=n2--c--n3};
    \draw
    pic["$\beta_3$", draw=orange, ->, angle eccentricity=1.4, angle radius=0.6cm] {angle=n3--c--n4};
    \draw
    pic["$\beta_4$", draw=orange, ->, angle eccentricity=1.4, angle radius=0.6cm] {angle=n4--c--n1};
    
\end{tikzpicture}   
\caption{Example scheme of the system with four agents at positions $p_1$, $p_4$, $p_3$, $p_2$. Note how each of them has access to the distance to the boundary, which represented by a circumference.} \label{Scheme1}
\end{figure}
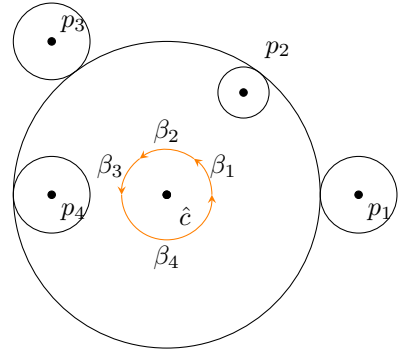

We can define the distance of each agent $i$ to the centre as $D^c_i(t)=\|\mathbf{\hat{c}}-\mathbf{p}_i(t)\|$. Since we don't have access to the centre $c$, the distance to the estimated centre is represented as $\hat{D}^c_i(t)=\|\mathbf{\hat{c}}(t)-\mathbf{p}_i(t)\|$. Then, knowing that each robot has access to its distance to the boundary, we can define it as 
\begin{equation}\label{distances}
\begin{aligned} 
D^b_i(t) = D^c_i(t)-r(t).
\end{aligned} 
\end{equation}
This value is constantly measured by each agent, as in  Fig. \ref{Scheme1} and Fig. \ref{Scheme}. Note that $D^b_i(t)$ is positive if the agent is outside the algal bloom area or negative if it is inside the algal bloom area. For example, if an agent $i$ is inside of the circle about 5 meters then $D^b_i=-5$ and if this agent is outside of the circle about 5 meters then $D^b_i=5$.

\begin{figure}
\centering
\begin{tikzpicture}
  [
    scale=2,
    >=stealth,
    point/.style = {draw, circle,  fill = black, inner sep = 1pt},
    dot/.style   = {draw, circle,  fill = black, inner sep = .2pt},
  ]
  
   % boundary circle
  \def\rad{0.4}
  \node (n3) at (1.2,0) [ ]{};
  \draw (n3) circle (\rad);
  
  % real circle
  \def\rad{0.9}
  \node (real) at (-0.3,0.2) [point, label = {above left:$c$}]{};
  \draw (real) circle (\rad); 
  \node (n5) at (-1.2,0.2) [point, label] {};
  \draw[-] (real) -- node (d) [label = {above left:$r$}] {} (n5);
  
  % estimated circle
  \def\rad{0.8}
  \node (est) at (0,0) [point, label = {below right:$\hat{c}$}]{};
  \draw (est) circle (\rad);

  % just points on the circle
  \node (n1) at (0.7,1) [point, label] {};
  \node (n2) at +(-180:\rad) [point, label] {};
  \node (n3) at (1.2,0) [point, label] {};
  \node (n4) at (0.8,0) [point, label] {};
  
  % triangle edges: connect the vertices, and leave a node at the midpoint
  \draw[-] (est) -- node (b) [label = {below:$\hat{r}$}] {} (n2);
  \draw[-] -- node (c) {} (n3);
  \draw[-] (n3) -- node (n3) [label = {below:$\hat D^b_i$}] {} (n4);
  \draw[-] (est) -- node (n1) [label = {left:$\hat D^c_{i+1}$}] {} (n1);
  
  % angle
    \draw[dotted]
    (0.7,1) coordinate (a) node[right] {$p_{i+1}$}
    -- (0,0) coordinate (b) node[left] {}
    -- (1.2,0) coordinate (c) node[above right] {$p_{i}$};
    
    \draw
    pic["$\beta_i$", draw=orange, ->, angle eccentricity=1.4, angle radius=0.6cm] {angle=c--b--a};

  % uncertain circle    
    \draw[decoration={random steps, amplitude=2mm}, decorate] (-0.3,0.2) circle (0.9);
    
\end{tikzpicture}   
\caption{Scheme of the estimated $\hat{c}$, $\hat{r}$ and the real target $c$, $r$ as well as the angle $\beta_i$ between two agents at $p_{i+1}$ and $p_i$ } \label{Scheme}
\end{figure}
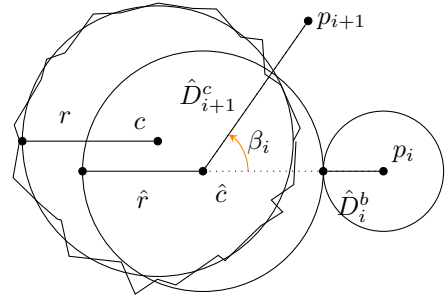

\begin{definition}[Circumnavigation]
When the target is stationary, i.e., $\mathbf{c}$ and $r$ are constant, circumnavigation is achieved if the agents   
\begin{enumerate}
\item move in a counterclockwise direction on the boundary of the target, and
\item are equally distributed along the circle, i.e., $\beta_i =\frac{2\pi}{n} $.
\end{enumerate}
More precisely, we say that the circumnavigation is achieved asymptotically if the previous aim is satisfied for $t\rightarrow\infty$. 

For the case with time-varying target, we assume that $\|\dot{\mathbf{c}}\|\leq\varepsilon_1$ and $|\dot{r}|\leq\varepsilon_2$ for some positive constant $\varepsilon_1$ and $\varepsilon_2$.
\end{definition}

Now we are ready to pose the problem of interest that will be solved in the following sections.

\medskip

\textbf{Problem 1.} 
Design estimators for $\mathbf{c}(t)$ and $r(t)$ when both the distance measures \eqref{distances} and GPS positions are available to each agent. Design the control input $\mathbf{u}_i$ for all the agents such that for some positive $\varepsilon_1$, $\varepsilon_2$,
\begin{align}
    &\|\mathbf{\dot c }\|\leq\varepsilon_1\\
    &|\dot r |\leq\varepsilon_2,
\end{align}
there exist positive $K_1$, $K_2$ and $K_3$ satisfying
\begin{align}
    &\lim\sup_{t\rightarrow\infty} \|\hat{\mathbf{c}}(t)-\mathbf{c}(t)\| \leq K_1 \varepsilon_1,\\
    &\lim\sup_{t\rightarrow\infty} |\hat{r}(t)-r(t)| \leq K_2 \varepsilon_2,\\
    &\lim\sup_{t\rightarrow\infty} |D_i^b| \leq K_3 \varepsilon_2,\\
    &\lim_{t\rightarrow\infty} \beta_i = \frac{2\pi}{n}.
\end{align}

%%%%%%%%%%%%%%%%%%%%%%%%%%%%%%%%%%%%%%%%%%%%%%%%%%%%%%%%%%%%%%%%%%%
\section{MAIN RESULTS}\label{s: main}

Here follows our solution for Problem 1. We consider $n$ agents at positions $p_i(t)$ and we assume all of them are capable of measuring their distances $D_i^b(t)$ to the target boundary including whether they're inside ($D_i^b(t)$ is negative) or outside ($D_i^b(t)$ is positive) of it. Then, they should estimate $(\mathbf{c}(t),r(t))$ from their shared measurements. For robustness, they update their estimates by taking the average of the estimated variables by the $n$ agents. Also, if one or more agents suffered faulty measurements due to bad conditions or failure, the system is ready to support that situation by using the remaining agent's estimates. Each agent calculates its desired velocity taking into account its angle $\beta_{i}(t)$ to the next agent and its distance to the boundary. The scheme on Fig.\ref{Scheme5} summarises this algorithm loop.

First step is the estimation of the circle. Having all the agents constantly measuring $D^b_i$ we can fit a unique circle as in Fig.\ref{Scheme1}, given that the target shape is a circle. Mathematically, such circle can be obtained through triangulation and, therefore, we would only need 3 agents to obtain a unique solution. However, for better coverage of all the fronts and for robustness, more than 3 agents are considered. Note that, in this paper's result section we used 4 agents. So, we apply the least squares method to obtain the approximated circle as in \eqref{lsq}.
\begin{align} \label{lsq}
 &\min_{\mathbf{\hat{c}},\hat{r}}\sum_i^n \left ( \|\mathbf{p}_i-\mathbf{\hat{c}}\| - (\hat{r}+D^b_i) \right )^2. \\ \nonumber
 & s.t \qquad \hat r>0. 
\end{align}

%Then, for every iteration, each agent has an estimated circle defined by $\hat{c_i}$ and $\hat{r_i}$. All agents share this information and, for robustness purposes, they average this value obtaining $\hat{c}$ and $\hat{r}$ as in \eqref{avg1}.
%\begin{align} \label{avg1}
% \hat{c} = \frac{1}{n} \sum_i^n (\hat{c_i}) \hspace{3mm}, \hspace{3mm} \hat{r} = \frac{1}{n} \sum_i^n (\hat{r_i}). 
%\end{align}

Now, we want to obtain the desired control input $\mathbf{u_i}(t)$ using the previously measured and estimated variables. The total velocity of each agent comprises of two sub-tasks: approaching the target and circumnavigating it. Therefore we define the direction of each agents towards the centre of the target as the bearing $\psi_i(t)$,
\begin{equation} \label{bearing}
 \psi_i(t) = \frac{\mathbf{\hat{c}}(t) - p_i(t)}{\hat{D^c_i}(t)} = \frac{\mathbf{\hat{c}}(t) - p_i(t)}{\|\mathbf{\hat{c}}(t)-p_i(t)\| \\}. 
\end{equation}

Note that $\psi_i$ in \eqref{bearing} is not well-defined when $\hat{D^c_i}=0$, thus we will prove that this singularity is avoided for all time $t\geq 0$ in Theorem \ref{Theo1}.

In order to build the control, we need to define $\mathbf{\dot{\hat{c}}}(t)$ and $\dot{\hat{r}}(t)$. Even though $\mathbf{c}(t)$ and $r(t)$ are continuous functions, our estimates $\mathbf{\hat{c}}(t)$ and $\hat r(t)$ are, inevitably, a discrete function. Therefore, for each time interval $\Delta_T$, we define $\mathbf{\dot{\hat{c}}}(t)$ and $\dot{\hat[ r]}(t)$ as
\begin{align} \label{dotc}
 &\mathbf{\dot{\hat{c}}}(t) = \frac{\mathbf{\hat{c}}(t+\Delta_T)-\mathbf{\hat{c}}(t)}{\Delta_T} \\ \label{dotr}
 &\dot{\hat{r}}(t) = \frac{\hat r(t+\Delta_T)-\hat r(t)}{\Delta_T}
\end{align}

The first sub-task is related to the bearing $\psi_i(t)$ and the second one is related to its perpendicular, $E\psi_i(t)$.
Therefore, the control law for each agent $i$ is
\begin{equation}\label{e:control-adapt}
\mathbf{u_i} =  
\mathbf{\dot{\hat{c}}} + ((\hat{D^c_i} - \hat r) - \dot{\hat{r}})\psi_i + \beta_i \hat{D^c_i} E \psi_i
\end{equation}

\begin{remark}
    Note that for implementation we would define $U_i$ as the control input for each agent $i$. Then, $U_i$ must have a maximum absolute value $u_{max}$ since the maximum velocity of the agent would be limited as well. $U_i$ could either be represented as $U_i=\delta u_i$, being $\delta$ some positive parameter for tuning, or represented as the saturation function: if $\|u_i\| > u_{max}$ then $U_i=\frac{u_{max}}{\|u_i\|}u_i$, else $U_i = u_i$.
\end{remark}

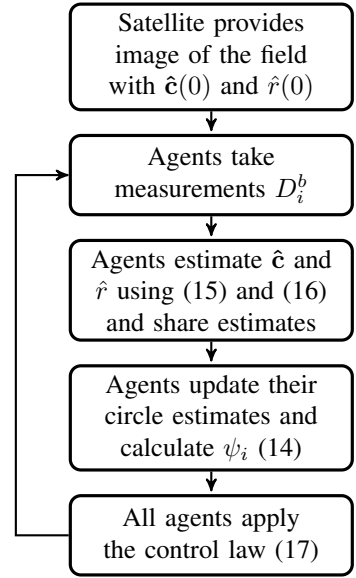
\begin{figure}
\centering
    \begin{tikzpicture}
  [node distance=.3cm, start chain=going below]
     \node[punktchain, join] (1) {Satellite provides image of the field with $\mathbf{\hat{c}}(0)$ and $\hat{r}(0)$};
     \node[punktchain, join] (2) {Agents take measurements $D_i^b$ };
     \node[punktchain, join] (3) {Agents estimate $\mathbf{\hat{c}}$ and $\hat{r}$ using \eqref{dotc} and \eqref{dotr} and share estimates};
     \node[punktchain, join] (4) {Agents update their circle estimates and calculate $\psi_i$ \eqref{bearing}};
     \node[punktchain, join] (5) {All agents apply the control law \eqref{e:control-adapt}};
     
     \draw[|-,-|,->, thick,] (5.west) -|+(-2em,0)|- (2.west);
  \end{tikzpicture}
\caption{Scheme of the algorithm run on the system} \label{Scheme5}
\end{figure}

\begin{theorem} \label{Theo1}
Consider the system \eqref{e:dyn_agent} with the control protocol \eqref{e:control-adapt}, and $\|\dot{\mathbf{c}}\|\leq\varepsilon_1$, $|\dot{r}|\leq\varepsilon_2$, then there exists $K_1$, $K_2$ and $K_3$ such that circumnavigation of the moving circle with equally spaced agents can be achieved asymptotically up to a bounded error, i.e.
\begin{align}\label{oi}
    &\lim\sup_{t\rightarrow\infty} \|\hat{\mathbf{c}}(t)-\mathbf{c}(t)\| \leq K_1 \varepsilon_1,\\\label{oi2}
    &\lim\sup_{t\rightarrow\infty} |\hat{r}(t)-r(t)| \leq K_2 \varepsilon_2,\\\label{oi3}
    &\lim\sup_{t\rightarrow\infty} |D_i^b| \leq K_3 \varepsilon_2,\\ \label{oi4}
    &\lim_{t\rightarrow\infty} \beta_i = \frac{2\pi}{n}.
\end{align}
\end{theorem}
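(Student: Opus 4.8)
The plan is to decouple the closed loop into three nearly independent pieces — the radial distances $\hat{D}^c_i$, the inter-agent angles $\beta_i$, and the quality of the least-squares estimate — and then recombine them. First I would exploit the orthogonal split built into the control law \eqref{e:control-adapt}. Writing $w_i = p_i - \hat{\mathbf{c}}$, so that $\hat{D}^c_i = \|w_i\|$ and $\psi_i = -w_i/\hat{D}^c_i$ by \eqref{bearing}, the relative velocity is $\dot w_i = \mathbf{u}_i - \dot{\hat{\mathbf{c}}} = ((\hat{D}^c_i-\hat r)-\dot{\hat r})\psi_i + \beta_i \hat{D}^c_i E\psi_i$. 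Using $\psi_i^\top\psi_i = 1$ and $\psi_i^\top E\psi_i = 0$ (since $E$ is a $90^\circ$ rotation), the radial projection gives $\dot{\hat{D}}^c_i = -(\hat{D}^c_i-\hat r)+\dot{\hat r}$, so the radial error $e_i := \hat{D}^c_i - \hat r$ obeys $\dot e_i = -e_i$ and decays to zero exponentially, independently of how the target or the estimate moves. The tangential projection, divided by $\hat{D}^c_i$, gives the angular rate $\dot\theta_i = \beta_i$, where $\theta_i$ is the phase of $w_i$.

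Next I would convert the angular relations into a linear consensus system. Since $\beta_i = \theta_{i+1}-\theta_i$ (indices cyclic) as long as the agents keep their counterclockwise ordering, we get $\dot\beta_i = \beta_{i+1}-\beta_i$, i.e. $\dot{\beta} = A\beta$ with $A = S - I$ and $S$ the cyclic shift. The matrix $A$ is the negated Laplacian of a directed cycle: $\mathbf{1}$ spans its kernel and every other eigenvalue $e^{2\pi i k/n}-1$ has strictly negative real part, so $\beta$ converges to the kernel, and because $\tfrac{d}{dt}\sum_i\beta_i = 0$ preserves $\sum_i\beta_i = 2\pi$, the limit is exactly $\beta_i = 2\pi/n$, establishing \eqref{oi4}. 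Non-negativity of $\beta$, needed to justify the ordering, follows because $e^{At} = e^{-t}e^{St}$ has non-negative entries, so $\beta(0)\geq 0$ forces $\beta(t)\geq 0$. The same equation $\dot e_i=-e_i$ together with the outside initialisation ($e_i(0)>0$) keeps $\hat{D}^c_i(t) = \hat r(t)+e_i(0)e^{-t} > \hat r(t) > 0$, so $\psi_i$ never meets the singularity $\hat{D}^c_i=0$, discharging the claim deferred before Theorem \ref{Theo1}.

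Then I would bound the estimate. With exact instantaneous range measurements the residual of \eqref{lsq} vanishes at the true circle, and three non-collinear agents make that minimiser unique, so the fit recovers $(\mathbf{c},r)$ exactly at each update instant; the only error is the inter-update drift of the target, bounded by $\|\dot{\mathbf{c}}\|\Delta_T \leq \varepsilon_1\Delta_T$ and $|\dot r|\Delta_T \leq \varepsilon_2\Delta_T$, which yields \eqref{oi} and \eqref{oi2} with $K_1,K_2 \sim \Delta_T$. Finally I combine: from $D^b_i = (D^c_i-\hat{D}^c_i)+(\hat{D}^c_i-\hat r)+(\hat r - r)$ the middle term vanishes as $\hat{D}^c_i\to\hat r$, the reverse triangle inequality gives $|D^c_i-\hat{D}^c_i| \leq \|\mathbf{c}-\hat{\mathbf{c}}\|$, and the outer terms are controlled by the estimation errors, so $\limsup_t |D^b_i|$ is bounded by constants times $\varepsilon_1,\varepsilon_2$; in the comparable-rate regime this is written as $K_3\varepsilon_2$, giving \eqref{oi3}.

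The main obstacle I anticipate is the estimation step and its coupling back into the control. The hard part will be making rigorous that the least-squares fit together with the finite-difference derivatives \eqref{dotc}–\eqref{dotr} track the moving circle with error proportional to $\varepsilon_1,\varepsilon_2$, and then showing that the discretisation perturbations this introduces into the otherwise exact dynamics $\dot e_i=-e_i$ and $\dot\beta = A\beta$ remain small enough to preserve both the exact angular consensus \eqref{oi4} and the $\limsup$ bounds — in particular keeping $\beta(t)\geq 0$ so that the identity $\beta_i=\theta_{i+1}-\theta_i$ holds for all $t$, on which the entire consensus argument rests.
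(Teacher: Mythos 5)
Your proposal follows essentially the same route as the paper's own proof: the same radial error $e_i=\hat{D}^c_i-\hat r$ satisfying $\dot e_i=-e_i$ (the paper's $W_i$), the same reduction of the inter-agent angles to the positive linear consensus system $\dot\beta=-L\beta$ on a directed ring with $\sum_i\beta_i=2\pi$ conserved, the same zero-residual least-squares argument plus inter-update drift for \eqref{oi}--\eqref{oi2}, and the same positivity argument ($e_i(0)>0$, $\hat r>0$) to rule out the singularity of $\psi_i$. The only differences are presentational: you prove the cyclic consensus spectrally where the paper cites an external theorem, and your three-term decomposition of $D_i^b$ is in fact more explicit than the paper's own justification of \eqref{oi3}.
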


\begin{proof}
The proof is divided into four parts. In the first part, we prove that \eqref{oi} and \eqref{oi2} hold. In the second part, we prove that the estimated distance $\hat{D^c_i}$ converges to the estimated radius $\hat{r}$, or in other words, that \eqref{oi3} holds. In the third part we prove that the singularity of the bearing $\psi_i(t)$ is avoided. In the last part, we show that the angle between the agents will converge to the average consensus for $n$ agents, $\beta_i = \frac{2\pi}{n}$, meaning \eqref{oi4} holds. We will assume the implementable controller is given by $U_i=\delta \mathbf{u}_i$.
\begin{enumerate}

    \item Firstly, we prove that \eqref{oi} and \eqref{oi2} hold. Having \eqref{lsq} we can see that its lowest minimum possible is zero. Then, all of its terms must be zero as well. So, we have that the following holds:
    \begin{align} \label{equal}
        \|p_i-\mathbf{\hat{c}}\| = \hat{r}+D^b_i  \qquad \forall i\in[1,...,n] 
    \end{align}
    Note that, geometrically, this corresponds to the Pythagoras theorem. The left part of the equality correspond to the sides of the triangle in $x$ and $y$ and the right side corresponds to the hypotenuse. Therefore, the only values for which the equality \eqref{equal} holds is for $\mathbf{\hat{c}}=\mathbf{c}$ and $\hat{r}=r$.
    However, $\mathbf{c}(t)$ and $r(t)$ are continuous functions and $\mathbf{\hat{c}}$ and $\hat r(t)$ are discrete functions. Therefore, instead of $\lim\sup_{t\rightarrow\infty} \|\hat{\mathbf{c}}(t)-\mathbf{c}(t)\| = 0$, we get $\lim\sup_{t\rightarrow\infty} \|\hat{\mathbf{c}}(t)-\mathbf{c}(t)\| \leq K_1 \varepsilon_1$. Being $K_1$ a parameter equal to the differentiation interval $\Delta_T$. Same applies for the radius $r(t)$.
    
    \item We prove that all agents reach the estimate of the boundary of the moving circles asymptotically, i.e., $\lim_{t\rightarrow\infty}{\hat{D^c_i}(t)} = \hat r(t)$, so \eqref{oi3} holds.     
        
    Consider the function $W_i(t):=\hat{D^c_i}(t)-\hat r(t)$ whose time derivative for $t\in[0,\tau_{\max})$ is given as
    \begin{align*}
        \dot W_i = & \frac{(\mathbf{\hat{c}}-p_i)^\top (\mathbf{\dot{\hat{c}}}-\dot{p}_i)}{\hat{D^c_i}} - \dot{\hat{r}} \\
         =  & -\frac{(\mathbf{\hat{c}}-p_i)^\top}{\hat{D^c_i}}\psi_i\delta (\hat{D^c_i} - \hat r - \dot{\hat{r}}) \\
         & \qquad -\frac{(\mathbf{c}-p_i)^\top}{\hat{D^c_i}}E\psi_i\delta \beta_i \hat{D^c_i} - \dot{\hat{r}} \\
         = & - \delta(\hat{D^c_i} - \hat{r} - \dot{\hat{r}}) - \dot{\hat{r}}
         = - \delta W_i.
    \end{align*}
    Hence for $t\in[0,+\infty)$, we have $\hat{D^c_i}(t) = \delta W_i(0) e^{-t} + \hat{r}(t)$ which implies $W_i$ is converging to zero exponentially.
    
    \item Now, we prove that $\psi_i$ in \eqref{bearing} is well-defined, or in  other words, that its singularity is avoided for all time $t\geq 0$, $\hat{D^c_i}\ne0$ $\forall t$.

    Having $\hat{D^c_i}(t) = \delta W_i(0) e^{-t} + \hat{r}(t)$ from the previous proof and knowing that $W_i(0)$ is always positive and that it converges to zero exponentially, we have that if $\hat{r}(t)>0$ then $\hat{D^c_i}(t)>0$, $\forall t$.
    
    So we would have to prove that $\hat{r}(t)>0$ $\forall t$. Given that we use the least squares method to obtain the estimate of the radius, we can see how one of the constraints guarantees that $\hat{r}(t)>0$ $\forall t$.
    
    Then we conclude that $\hat{D^c_i}\ne0$ $\forall t$ and that the bearing $\psi_i(t)$ is well defined $\forall t$.
    
    \item Finally, we show that the angle between the agents will converge to the average consensus for $n$ agents, $\beta_i = \frac{2\pi}{n}$, so \eqref{oi4} holds. 
    
    Firstly, note that we can write an angle between two vectors $\beta_i=\angle(v_2,v_1)$ as
    \begin{equation}
        \beta_i=2atan2((v_1\times v_2)\cdot z,\|v_1\|\|v_2\|+v_1\cdot v_2)
    \end{equation}
    and its derivative as
    \begin{equation}
        \dot\beta_i=\frac{\hat{v_1}\times z}{\|v_1\|}\dot{v_1}-\frac{\hat{v_2}\times z}{\|v_2\|}\dot{v_2}
    \end{equation}
    where $z = \frac{v_1\times v_2}{\|v_1\times v_2\|}, \hat{v_i} = \frac{v_1}{\|v_i\|}, i=1,2$.
    
    Then, for $v_1=p_i-\hat{c}$ and $v_2=p_{i+1}-\hat{c}$ we get
    \begin{equation*}
    \begin{aligned}
        \dot{\beta_i} &= \frac{\hat{v_1}\times z}{\|v_1\|}\dot{v_1}-\frac{\hat{v_2}\times z}{\|v_2\|}\dot{v_2} \\
        &= \frac{\hat{v_1}\times z}{\|v_1\|}\delta((\hat{D^c_i} - \hat r - \dot{\hat{r}})\psi_i + \beta_i \hat{D^c_i} E \psi_i) \\
        & \qquad -\frac{\hat{v_2}\times z}{\|v_2\|}\delta((\hat{D}^c_{i+1} - \hat r - \dot{\hat{r}})\psi_{i+1} \\
        & \qquad \qquad + \beta_{i+1} \hat{D^c_{i+1}}  E \psi_{i+1}) \\
        % &= -\frac{1}{\|v_1\|} \beta_{i} + \frac{1}{\|v_2\|} \beta_{i+1} \\
        &= \delta (-\beta_{i} + \beta_{i+1}), \qquad i= 1,\ldots,n-1 \\ 
        \dot{\beta}_n &= \delta (-\beta_{n} + \beta_{1}).
    \end{aligned}
    \end{equation*}
    which can be written in a compact form as following
    \begin{align}
        \dot{\beta} = - \delta B^\top \beta \label{e:dyn-beta}
    \end{align}
    where $B$ is the incidence matrix of the directed ring graph from $v_1$ to $v_n$. 
    
    First, we note that the system \eqref{e:dyn-beta} is positive (see e.g., \cite{farina2000positive}), i.e., $\beta_i(t)\geq 0$ if $\beta_i(0)\geq 0$ for all $t\geq 0$ and $i\in\calI$. This proves the positions of the agents are not interchangeable.
    Second, noticing that $B^\top$ is the (in-degree) Laplacian of the directed ring graph which is strongly connected, then by Theorem 6 in \cite{Wei2018}, $\beta$ converges to consensus $\frac{2\pi}{n}\mathds{1}$.
    
%    Note that $\hat{r}(t)$ converges a neighbourhood of $r(t)\in[\underline{r}, \overline{r}]$ with $\underline{r}$, we can assume without loss of generality $\hat{r}(t)\in[\underline{\hat{r}}, \overline{\hat{r}}]$ with $\underline{\hat{r}}>0$.   
%    Since $\hat{D}^c_i(t)$ converges to $\hat{r}(t)$ as $t\rightarrow\infty$, then for any $\varepsilon_3 > 0$, there exists $T$ such that $|\frac{1}{\hat{D}^c_i(t)}-\frac{1}{\hat{r}(t)}|< \varepsilon_3$ for any $t\geq T$ and $i\in\calI$. Then for any $t\geq T$, 
%    \begin{align*}
 %       \dot{W} & \leq  - \frac{1}{\hat{r}(t)} \beta^\top B^\top \beta + \JW{to be fixed}.
%    \end{align*}
    
%    \qedsymbol
\end{enumerate}
\end{proof}

\begin{remark}
    Note how the agent $A_i$ will necessarily maintain its relative position $p_i$ throughout the circumnavigation mission. In fact, we can prove that agent $A_i$ is always in position $p_i$.
\end{remark}
\begin{remark}
We proved both convergence of the angle to the average consensus for $n$ agents and convergence of these agents towards the boundary of the target up to a given bound. Therefore, we guarantee collision avoidance.
\end{remark}

%%%%%%%%%%%%%%%%%%%%%%%%%%%%%%%%%%%%%%%%%%%%%%%%%%%%%%%%%%%%%%%%%%%%%
\section{SIMULATION RESULTS}\label{s:simulation}

In this section, we present simulations for the protocol designed in section \ref{s: main}. We use the derived method for estimation of the target \eqref{lsq} and the controlling protocol for the agents \eqref{e:control-adapt}. For this section, we discretize the whole algorithm to be able to use it computationally. 

We use the target present in the images provided by SINMOD simulations \url{https://www.sintef.no/en/ocean/initiatives/sinmod/#/}. The present simulation corresponds to approximately 4 days of data and the target we obtained is approximately 1-3km in radius.

\begin{figure}
  \begin{subfigure}[t]{.49\textwidth}
    \centering
    \includegraphics[width=\linewidth,trim={0.5cm 0 0.8cm 0.6cm},clip]{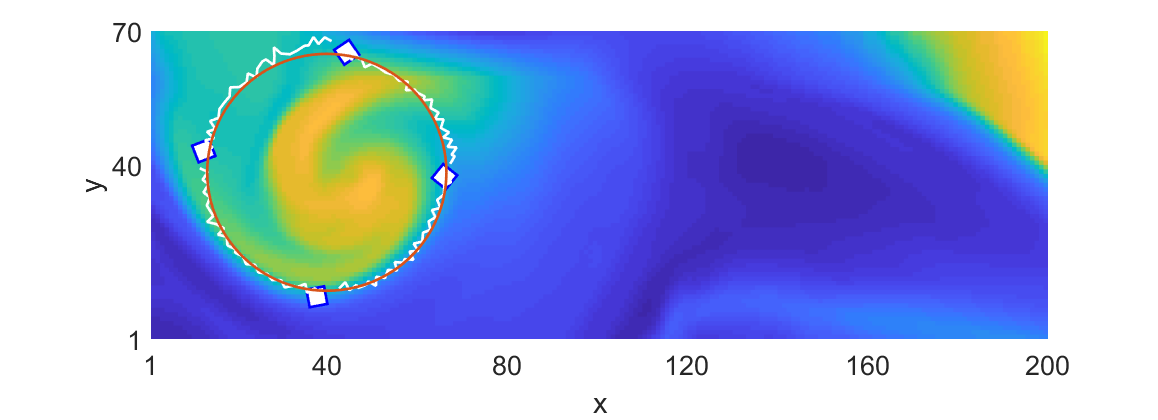}
  \end{subfigure}
  \begin{subfigure}[t]{.49\textwidth}
    \centering
    \includegraphics[width=\linewidth,trim={0.5cm 0 0.8cm 0.6cm},clip]{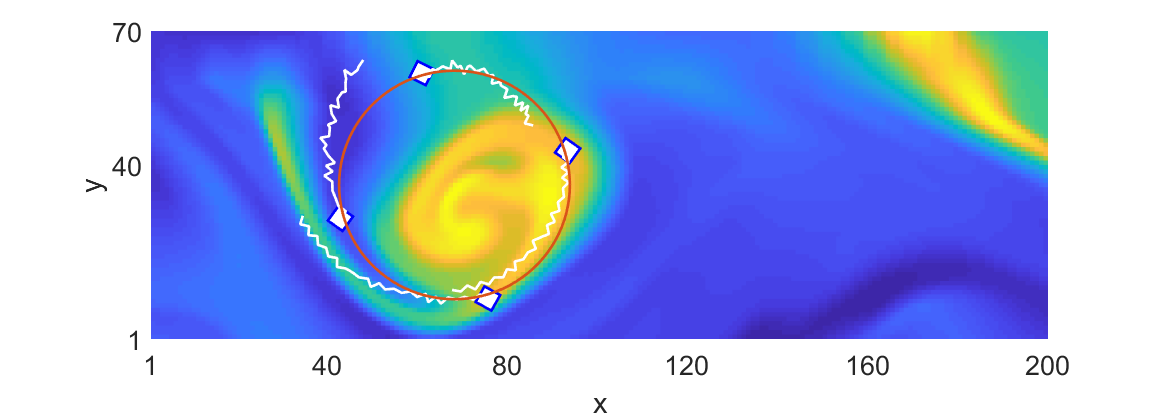}
  \end{subfigure}
  \begin{subfigure}[t]{.49\textwidth}
    \centering
    \includegraphics[width=\linewidth,trim={0.5cm 0 0.8cm 0.6cm},clip]{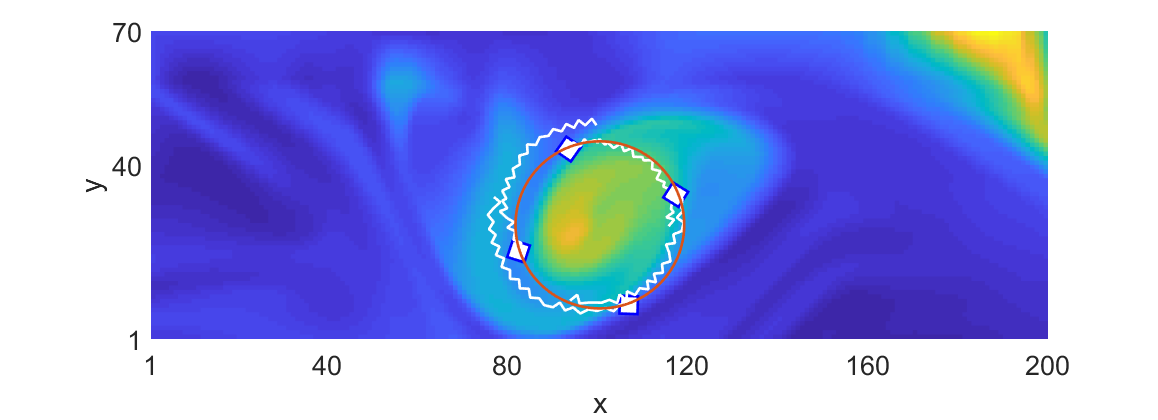}
  \end{subfigure}
  \begin{subfigure}[t]{.49\textwidth}
    \centering
    \includegraphics[width=\linewidth,trim={0.5cm 0 0.8cm 0.6cm},clip]{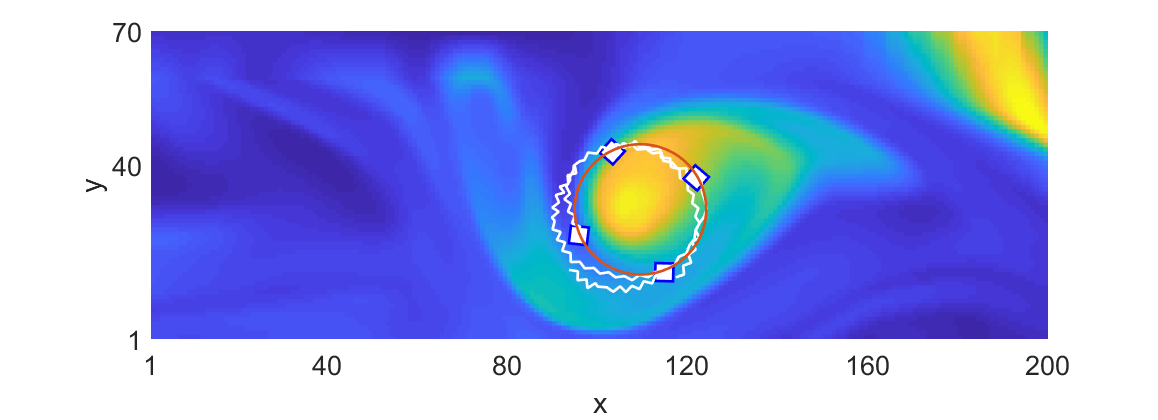}
  \end{subfigure}
    \begin{subfigure}[t]{.49\textwidth}
    \centering
    \includegraphics[width=\linewidth,trim={0.5cm 0 0.8cm 0.6cm},clip]{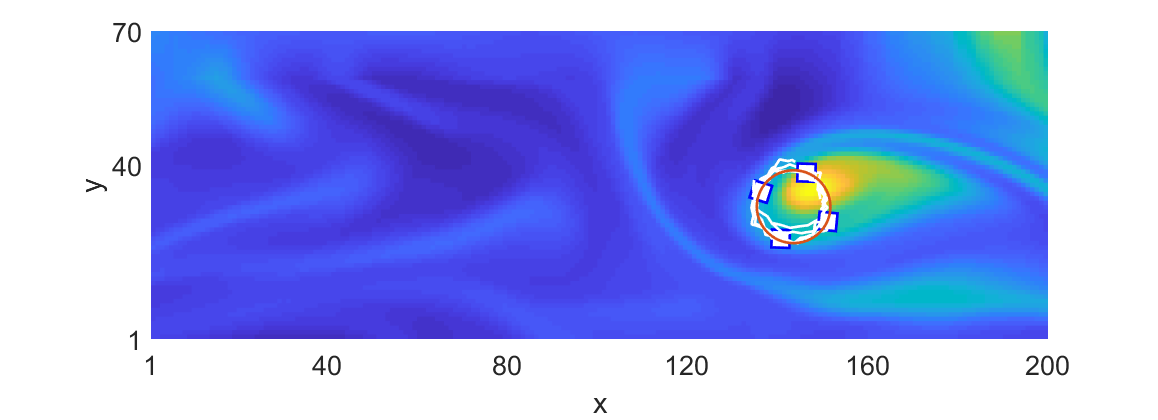}
  \end{subfigure}
    \begin{subfigure}[t]{.49\textwidth}
    \centering
    \includegraphics[width=\linewidth,trim={0.5cm 0 0.8cm 0.6cm},clip]{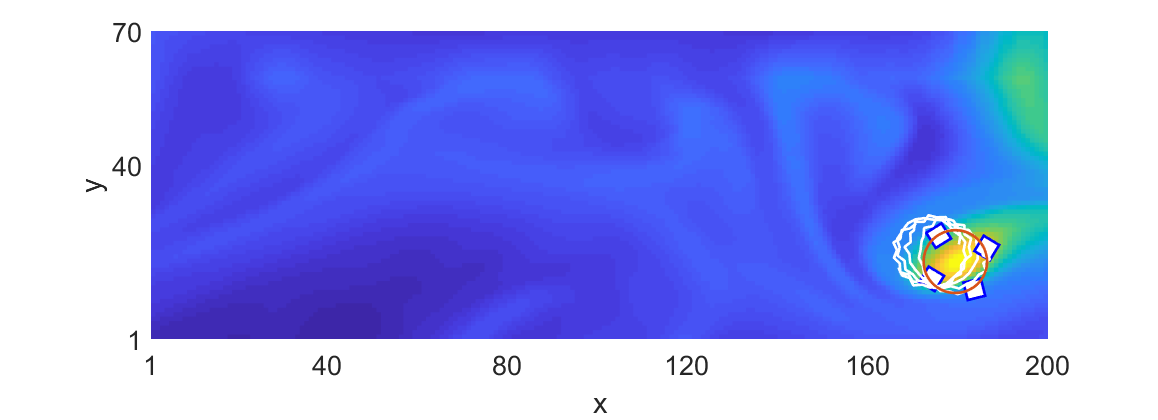}
  \end{subfigure}
  \caption{Time-lapse of four agents circumnavigating a moving target (red) with representation of their paths (white). Each plot is approximately half a day after the previous.}
  \label{paths}
\end{figure}
    
In Fig.\ref{paths} we can see the robot system circumnavigating the algal bloom target in a time-lapse. This specific algal bloom target is quite a challenge as it shape shifts quite abruptly.
Note that the agents were deployed in positions in the boundary so their initial error $D^b_i(0)$ is zero. Note also how, in some instances of the mission, the target moves fast to such extent that the robots present a delay. This effect is foreseen and explained in Theorem \ref{Theo1}. 

\begin{figure}
    \centering
    \includegraphics[width=9.5cm,trim={1.7cm 1.5cm 0 1.9cm},clip]{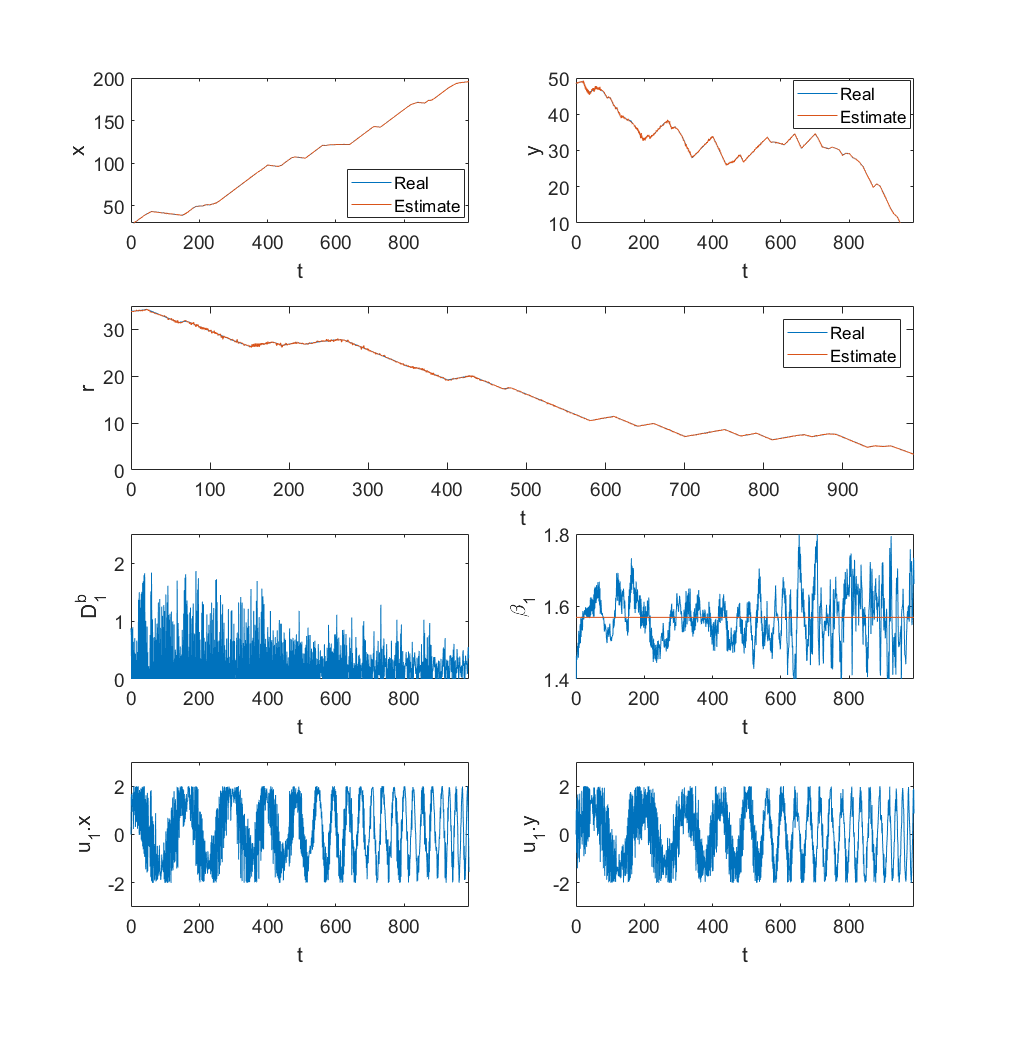}
    \caption{First and second row: real and estimated target's centre $c:x,y$ and radius $r$. Third row: tracking error of agent 1, $D_1^b$ and angle $\beta_1$. Fourth row: control input of agent 1, $u_1:x,y$}
    \label{targets2}
\end{figure}
   
Analysing the simulations, we observe each variable in Fig.\ref{targets2}. Firstly, we can see the comparison between the real position of the target and the estimates our algorithm provided. We can observe that the estimation follows closely the real value with an apparently very small error. Secondly, we analyse the distance of agent 1 to the boundary $D^b_1$ and the angle between agent 1 and 2, $\beta_1$. We can see the error is within the expected boundaries according to Theorem 1. Regarding the distance to the boundary, the error never exceeds 2 units (200 meters) and is most of the time up to 1 unit (100 meters). Note that each x and y coordinate unit corresponds to about 100 meters. Also, each time iteration unit corresponds to 6min. As for the angle between agents the maximum error is 0.2 radians which corresponds to a maximum angle error of 11 degrees. If we look at the plots for the control input of our agents, namely, for agent 1, we can see how the control was applied up to a maximum value. We defined the maximum speed of the agent for each coordinate to be 2 y units per 1 x unit which corresponds to 2km/h in each Cartesian direction (200m / 6min = 2km/h). 

%To better visualise these results, we provide an YouTube link (https://youtu.be/2DSey5mCirQ) in which one can find the video of the full simulation.

%%%%%%%%%%%%%%%%%%%%%%%%%%%%%%%%%%%%%%%%%%%%%%%%%%%%%%%%%%%%%%%%%%%%%
\section{CONCLUSIONS}\label{s: conclusion}

We designed a decentralised algorithm that guarantees circumnavigation of an irregular shape approximated by a circle up to a bounded error. The algorithm relies on one satellite and a number of robots according to the size of the target and to the importance of monitoring its fronts. Then, the proposed control protocol was proven to converge up to a bounded error. 

As future work, we would like to exploit surface vehicles with sensors that measure the point concentration of algal rather than to directly detect the boundary in the local region. Then we would have to explore and circumnavigate collecting data for the estimation. A further objective would be to track any irregular shape which may not be reasonably approximated  by a circle.

%%%%%%%%%%%%%%%%%%%%%%%%%%%%%%%%%%%%%%%%%%%%%%%%%%%%%%%%%%%%%%%%%%%%%
\section{ACKNOWLEDGEMENTS}\label{s: conclusion}

We would like to thank Morten Alver and Ingrid Ellingson for providing the SINMOD simulation data.

%%%%%%%%%%%%%%%%%%%%%%%%%%%%%%%%%%%%%%%%%%%%%%%%%%%%%%%%%%%%%%%%%%%%%

%\addtolength{\textheight}{-12cm}   % This command serves to balance the column lengths
                                  % on the last page of the document manually. It shortens
                                  % the textheight of the last page by a suitable amount.
                                  % This command does not take effect until the next page
                                  % so it should come on the page before the last. Make
                                  % sure that you do not shorten the textheight too much.

%%%%%%%%%%%%%%%%%%%%%%%%%%%%%%%%%%%%%%%%%%%%%%%%%%%%%%%%%%%%%%%%%%%%%

%\begin{thebibliography}{99}

%\bibitem{iman} Iman Shames, Soura Dasgupta, Fellow, Barıs¸ Fidan, and Brian D. O. Anderson, Circumnavigation Using Distance Measurements Under Slow Drift.
%\bibitem{johanna} Johanna O. Swartling, Iman Shames, Karl H. Johansson, Dimos V. Dimarogonas, Collective Circumnavigation.
%\bibitem{jason} Jason Kong, Mark Pfeiffer, Georg Schildbach, Francesco Borrelli, Kinematic and Dynamic Vehicle Models for Autonomous Driving
%Control Design

%\end{thebibliography}

\bibliographystyle{IEEEtran}
\bibliography{references}

\end{document}